%% file: iclr2027_conference.tex
\documentclass{article} 
\usepackage{iclr2027_conference,times}

\iclrfinalcopy

\input{math_commands.tex}

\usepackage{booktabs}
\usepackage[table]{xcolor}
\usepackage{multirow}
\usepackage{pgfplots}
\pgfplotsset{compat=1.18}
\usepgfplotslibrary{groupplots}
\usetikzlibrary{arrows.meta}
\definecolor{rowtint}{gray}{0.94}
\definecolor{rulegray}{gray}{0.55}
\definecolor{terracotta}{HTML}{C17F5A}
\definecolor{terracottalight}{HTML}{FBE3C1}
\definecolor{terracottadark}{HTML}{854C38}
\usepackage{hyperref}
\newcommand{\ours}[1]{\cellcolor{rowtint}#1}
\newcommand{\oursrow}{\rowcolor{rowtint}}

\usepackage{amsfonts}
\usepackage{amsmath}
\usepackage{amssymb}
\usepackage{amsthm}
\usepackage{nicefrac}
\usepackage{microtype}
\usepackage{graphicx}
\usepackage{capt-of}

\newcommand{\sd}[1]{\,{\scriptstyle\pm#1}}
\newcommand{\second}[1]{\underline{#1}}

\newtheorem{theorem}{Theorem}
\newtheorem{proposition}[theorem]{Proposition}

\newtheorem{remark}[theorem]{Remark}

\title{Semantic Latent Alignment for Flow Matching with Invertible Bi-Lipschitz Adapters}
\author{
  \textbf{Felix N\"utzel$^{1}$ \enspace
  Mischa Dombrowski$^{1}$ \enspace
  Luca Hagen$^{1}$ \enspace
  Bernhard Kainz$^{1,2}$} \\[0.5em]
  $^{1}$Friedrich-Alexander-Universit\"at Erlangen-N\"urnberg \\
  $^{2}$Imperial College London \\
}

\begin{document}

\maketitle
\lhead{Preprint}
\begin{abstract}
Pretrained latent spaces trade compression, reconstruction, and the learnability of the coordinates a downstream generator has to work in, which complicates adapting these spaces to different domains or tasks: continued training is expensive, and without a new alignment objective it often fails to improve downstream generation. We instead leave the encoder and decoder fixed and change only the coordinates between them, using an invertible bi-Lipschitz adapter aligned offline to a semantic teacher and frozen during class-conditional flow matching. Invertibility preserves compatibility with the pretrained autoencoder, while the bi-Lipschitz constraint limits how far alignment can distort its latent geometry. This provides a simple way to improve semantic organization while retaining the compression and reconstruction properties of the original latent space. On two medical imaging datasets, constrained latent adaptation improves feature metrics, seed stability and counterfactual edit strength, while remaining competitive on downstream classification. In contrast, unconstrained adaptation can become highly anisotropic and sensitive despite remaining invertible, highlighting the importance of controlling geometric distortion. Our results suggest that carefully constrained post-hoc adaptation offers a parameter-efficient and tuning-free alternative to jointly modifying the representation during generative training.
\end{abstract}

\section{Introduction}
\label{sec:intro}

Latent generative models, including latent diffusion~\citep{rombach2022high} and flow matching~\citep{lipman2023flow}, operate on compressed representations produced by pretrained variational autoencoders (VAEs)~\citep{kingma2013auto}. VAEs are typically optimized for a rate-distortion trade-off between compression and reconstruction fidelity~\citep{alemi2018fixing,bozkurt2021rate}. However, a representation that reconstructs well is not necessarily easy for a downstream model to learn~\citep{skorokhodov2025improving,zheng2026diffusion,dombrowski2026learnability}. Latent representation design therefore involves a third axis: \emph{downstream learnability}. We define learnability as the ease with which a bounded-capacity learner can recover task-relevant structure from a representation under the measurement process used by the downstream pipeline. It depends not only on which information the representation retains, but also on how that information is arranged~\citep{yao2025reconstruction,bfl2025representation}.

Fine-tuning the autoencoder can adapt the representation, but it also changes the decoder coordinates, requires a new latent cache, and adds a costly adaptation stage, with the adapted latents often still showing low learnability~\citep{dombrowski2026learnability}. Meanwhile, methods such as REPA~\citep{yu2025repa} align features inside the generative model during training. They improve the generator instead of the latent representation and repeat the alignment cost for each training run. 
We study a post-hoc alternative: adapt the latent coordinates once, then reuse them across flow-matching models.

We place a spatial invertible adapter between the frozen autoencoder and the generator. The adapter is trained offline on cached latents using features from a frozen semantic teacher. Flow matching consumes the adapted latents, while the adapter inverse returns generated samples to the coordinates expected by the pretrained decoder. Under exact inversion, the encoder-decoder reconstruction mapping remains unchanged. An unconstrained invertible map can still produce essentially collapsed latents, so we use a bi-Lipschitz design that bounds the distortion in both directions with a single scalar, estimated from the data.

For teacher alignment, we measure linear Centered Kernel Alignment (CKA)~\citep{kornblith2019similarity}, which only compares the relative similarity matrices of the two spaces. This matters because the adapted latent and the teacher embedding have different, very high dimensionalities and no shared coordinate system. We evaluate generation on two medical imaging datasets, using sample quality, semantic coverage, train-on-synthetic utility and counterfactual generation. Comparisons include the unchanged latent space, online generator alignment, autoencoder adaptation, and less constrained invertible maps.

Our contributions are:
\begin{enumerate}
    \item We propose a post-hoc latent adapter that is invertible and bi-Lipschitz by construction, aligned offline to a semantic teacher via CKA and frozen during flow matching. It leaves the autoencoder and the flow-matching objective unchanged, while a single scalar $\lambda$ bounds its distortion in both directions with a closed-form certificate.
    \item We make the adapter tuning-free: $\lambda$ can be estimated from a necessary condition on teacher triplets and the teacher from label geometry on training data, both fixed before any generator is trained. On both datasets the teacher choice agrees with the downstream results, and the method works across an eightfold range of $\lambda$.
    \item On our tested datasets, the adapter is competitive with online generator alignment and autoencoder adaptation, has the lowest seed variance in domain FID among the adaptation methods, and produces stronger counterfactual edits, at $405\times$ fewer parameters than REPA's projection head and a fraction of the compute of autoencoder fine-tuning.
    \item We show that invertibility alone is insufficient: invertible maps without Lipschitz control become nearly rank-deficient and fail as generators, and adapting only the clean endpoint instead of the whole flow path gives good but unreliable runs.
\end{enumerate}

\begin{figure*}[t]
    \centering
    \includegraphics[width=\linewidth]{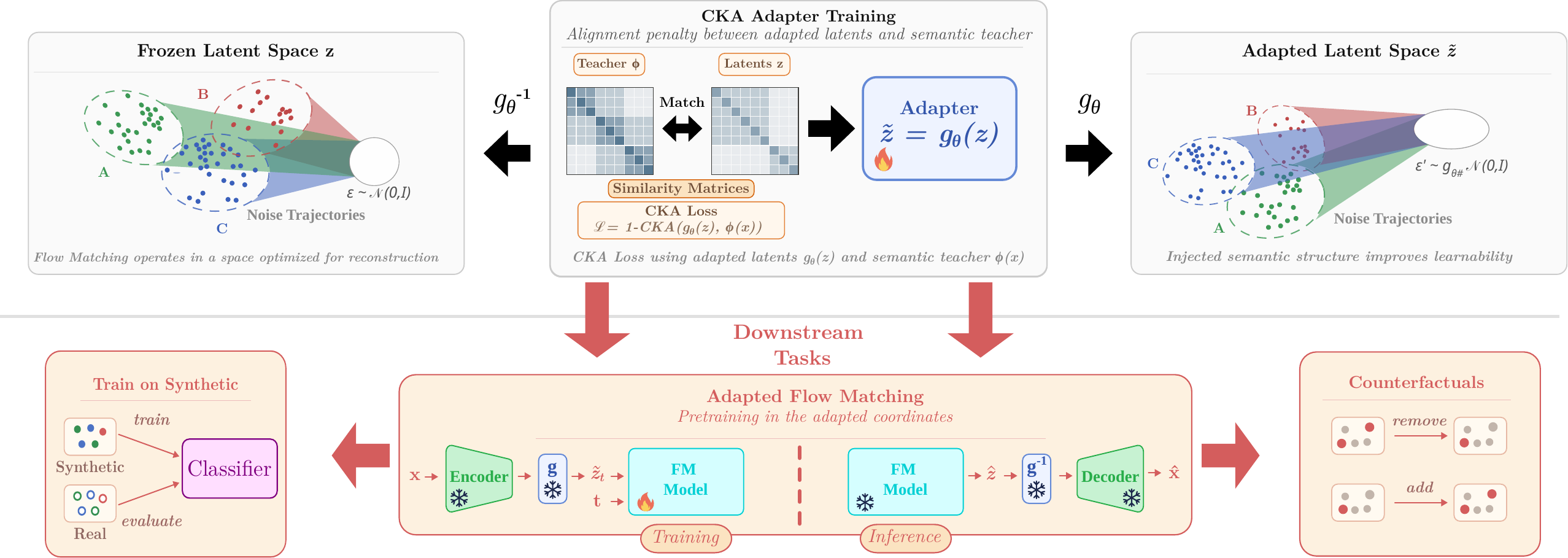}
    \caption{\textbf{Semantic latent adaptation.} \emph{Top:} an invertible adapter $g_\theta$ is trained offline on the cached latents $z$ of a frozen autoencoder, maximizing the linear CKA between the similarity structure of the adapted latents $g_\theta(z)$ and that of a frozen semantic teacher $\phi(x)$. The bi-Lipschitz constraint bounds how far this alignment can distort the original latent geometry, and $g_\theta^{-1}$ returns exactly to the coordinates the decoder expects. \emph{Bottom:} the adapter is then frozen, and flow matching is trained and sampled in the adapted space. The whole noise path is pushed through $g_\theta$, so the source distribution becomes $g_{\theta\#}\mathcal{N}(0,I)$, and generated latents are mapped back by $g_\theta^{-1}$ before decoding. We evaluate the generators by training classifiers on synthetic data and by counterfactual edits that add or remove a finding.}
    \label{fig:overview}
\end{figure*}

\section{Related work}
\label{sec:related}

Prior work improves latent generation by modifying the encoder-decoder pair, aligning representations inside the generator, or transforming an existing latent space. We study the latter via the additional requirement that the transformation remains invertible and well-conditioned.

\paragraph{Semantic alignment for latent generation.}
To semantically align the latents, \citet{yao2025reconstruction} and \citet{xu2025exploring} train visual tokenizers with semantic priors in VA-VAE and ReaLS, while in EQ-VAE,   the autoencoder is fine-tuned with an equivariance regularizer~\citep{kouzelis2025eqvae}.
Meanwhile, other semantic alignment procedures leave the VAE unchanged. One prominent example is REPA, which aligns intermediate denoiser states with features from a frozen vision encoder during generator training~\citep{yu2025repa}. \citet{singh2025matters} modify REPA to emphasize the spatial structure of the teacher features. 
Unlike prior alternatives, we leave  both the VAEs and the flow matching objective untouched. 

\paragraph{Invertible latent transformations.}
Invertible networks can be grouped into a few architectural families: coupling layers
\citep{dinh2015nice,dinh2017density,kingma2018glow}, autoregressive maps
\citep{kingma2016improving,papamakarios2017masked}, invertible residual blocks
\citep{behrmann2019invertible,chen2019residual}, and continuous-time flows
\citep{grathwohl2018scalable}. Most are trained as normalizing flows, where invertibility is what
makes the objective computable at all: maximum likelihood needs the change-of-variables term
$\log|\det J|$, so the map has to be invertible and its Jacobian determinant tractable. Applied to latents, this buys richer variational posteriors
\citep{rezende2015variational,kingma2016improving} or a density fitted to the codes of a pretrained autoencoder \citep{xiao2019generative,bohm2020probabilistic}. In each case the map transports one distribution onto another.

In our case, the invertible map is used as a reversible change of latent coordinates and not as a density model. The adapter aligns encoded latents with a semantic teacher, while its inverse returns generated latents to the coordinates expected by the frozen decoder. Because the objective does not require unrestricted transport, we constrain the map to remain close to an isometry and limit the distortion of the pretrained latent geometry.

To this end, we construct the adapter from invertible residual blocks
\citep{behrmann2019invertible,chen2019residual} with joint control of expansion and contraction. Related work certifies monotone residual layers with separate lower and upper Lipschitz bounds \citep{wang2024monotone}. Meanwhile, we tie them to expose a single distortion budget.

\section{Problem setup}
\label{sec:prelim}

\paragraph{Notation.}
We write $\|\cdot\|$ for the Euclidean norm. For a differentiable map $g:\mathbb{R}^m\to\mathbb{R}^m$
we write $J_g(z)$ for its Jacobian at $z$ and $\sigma_1(J_g(z))\ge\cdots\ge\sigma_m(J_g(z))\ge0$
for its singular values, with condition number $\kappa(J_g(z))=\sigma_1/\sigma_m$ when
$\sigma_m>0$. We reserve $\sigma$ for singular values and write $t$ for the flow-matching path
parameter. The pushforward of a measure $p$ under a measurable map $g$ is $g_\#p$.

\paragraph{Latents.}
Let $\mathcal{E}:\mathbb{R}^{H\times W\times3}\to\mathbb{R}^{C\times h\times w}$ be a pretrained
encoder and $\mathcal{D}$ its decoder. We identify the latent space $\mathcal{Z}$ with
$\mathbb{R}^m$, $m=Chw$, under the Euclidean norm of the flattened tensor, and cache
$\{(z_i,y_i)\}_{i=1}^N$ with $z_i=\mathcal{E}(x_i)$, with $y_i$ being a single label.

\paragraph{Semantic teacher.}
A frozen image encoder $\phi$ supplies the semantic reference. Our adapter objective uses its
global image embedding $\phi(x)\in\mathbb{R}^{d}$, a single vector per image, so it compares
images as a whole. The spatial alignment baselines (REPA, VA-VAE, ReaLS) instead use the encoder's
patch grid $\phi(x)\in\mathbb{R}^{d\times h'\times w'}$ and match it to latent or hidden-state
features location by location.

\paragraph{Bi-Lipschitz maps.}
A map $f:\mathbb{R}^m\to\mathbb{R}^m$ is $(\mu,\nu)$-bi-Lipschitz with $0<\mu\le\nu$ if
\begin{equation}
    \mu\|x-y\|\le\|f(x)-f(y)\|\le\nu\|x-y\|\qquad\forall\,x,y\in\mathbb{R}^m.
    \label{eq:bilip}
\end{equation}
The lower bound makes $f$ injective, while the upper bound limits
distortion. At $\mu=\nu=1$ the map is an isometry, and it is a near-isometry when both sit close
to $1$. A differentiable $(\mu,\nu)$-bi-Lipschitz map satisfies $\sigma_m(J_f(z))\ge\mu$ and
$\sigma_1(J_f(z))\le\nu$ at every $z$. The converse holds for a $C^1$ diffeomorphism of
$\mathbb{R}^m$ onto itself, where convexity of the domain lets one integrate the pointwise bounds
along segments.

\paragraph{Centered kernel alignment.}
To compare latents and semantic embeddings, we use CKA. For $n$ images, collect the adapted latents, each flattened to a vector by a map $P$ (Section~\ref{sec:objectives}), as rows of
$X\in\mathbb R^{n\times p}$, $X_i=P(g_\theta(z_i))$, and the teacher features as rows of
$Y\in\mathbb R^{n\times q}$, $Y_i=\phi(x_i)$, both column-centered. Their Gram matrices
$K=XX^\top$ and $L=YY^\top$ hold one entry per pair of images: $K_{ij}$ is how similar the
adapter finds images $i$ and $j$, and $L_{ij}$ is the teacher's verdict on the same pair.
Linear CKA is the cosine between these two similarity patterns,
\begin{equation}
    \operatorname{CKA}(X,Y)
    = \frac{\langle K,L\rangle_{\mathrm F}}{\|K\|_{\mathrm F}\,\|L\|_{\mathrm F}},
    \qquad
    \langle K,L\rangle_{\mathrm F}=\textstyle\sum_{i,j}K_{ij}L_{ij},
    \label{eq:cka}
\end{equation}
reaching $1$ when both spaces agree on every pair and $0$ when the two patterns are
unrelated.

\paragraph{Flow matching.}
Flow matching~\citep{lipman2023flow,liu2023flow} regresses a velocity field along a prescribed
probability path. For FLUX-type models the interpolant is $z_t=(1-t)z+t\varepsilon$ with
$\varepsilon\sim\mathcal{N}(0,I)$ and $t$ drawn from a flow-match scheduler, so $t=0$ recovers the
clean latent, and the regression target is $\varepsilon-z$. 

\section{Method}
\label{sec:method}

\subsection{Overview}
\label{sec:overview}

Our general approach is as specified in Figure~\ref{fig:overview}: First, we train an invertible adapter $g_\theta$ to semantically align the latent space with the  embeddings of a teacher $\phi$. For flow matching training, frozen $g_\theta$ adapts the cached original latents, while $g_\theta^{-1}$ carries generated latents back to the coordinates the decoder expects for inference:
\begin{equation}
    z=\mathcal E(x),\qquad \tilde z=g_\theta(z),\qquad x\approx\mathcal D(g_\theta^{-1}(\tilde z)).
    \label{eq:pipeline}
\end{equation}

The adapter leaves the latent dimensionality unchanged, and under exact inversion the autoencoder reconstruction mapping is identical to $\mathcal{D}\circ \mathcal{E}$. The adapter is bi-Lipschitz in the sense
of~\eqref{eq:bilip} with $\mu=e^{-\lambda}$ and $\nu=e^{\lambda}$, so it may rotate, bend and rescale the latent space, and it may not stretch or collapse any distance by more than $e^{\lambda}$. So, a single scalar $\lambda$ specifies how far the geometry is allowed to move.

\subsection{Aligning latents with a teacher}
\label{sec:objectives}

The semantic alignment between the cached latents and the adapter $g_\theta$ is achieved as follows: writing $P$ for the map that turns the spatial latent tensor into one vector per image, the primary objective is
\begin{equation}
    \mathcal L_{\mathrm{CKA}}
    = 1-\operatorname{CKA}\!\bigl(P(g_\theta(z)),\,\phi(x)\bigr).
    \label{eq:adapter_loss}
\end{equation}
We use two choices of $P$. Pool $0$ flattens the full adapted latent ($C\times h\times w=32\times64\times64$, i.e.\ $131{,}072$ dimensions), and pool $8$ first average-pools it to an $8\times8$ grid ($2{,}048$ dimensions), which discards fine spatial detail before the comparison. Linear CKA compares centered Gram matrices, so it reads relational structure and ignores isotropic scale. The objective therefore targets relational structure rather than latent magnitude, which suits spaces without shared coordinates.

\subsection{The distortion budget}
\label{sec:lambda}

Invertibility on its own is not enough, since an invertible map without additional bounds can still deform the original space so that important structure is destroyed, which makes it effectively unusable. Our unconstrained controls exhibit this failure mode: their estimated largest singular values exceed $360$ and their normalized participation ratios fall below $0.013$ (Table~\ref{tab:geometry}).

Therefore, similarly to \citet{wang2024monotone}, we build $g_\theta$ from residual blocks interleaved with Cayley-parameterized orthogonal mixers. Each block is $B_i(x)=c_ix+r_i(c_ix)$ with
$c_i=\cosh\lambda_i$ and $\operatorname{Lip}(r_i)\le\tanh\lambda_i<1$, where $r_i$ is a small spatial convolution stack whose Lipschitz bound comes from rescaling each of its weights to a fixed norm. The constant is therefore closed form, and it holds at every training step without a separate certification pass. The contractive-residual argument of
\citet{behrmann2019invertible} makes each block a bijection of $\mathbb{R}^m$ with bounds
$c_i(1\pm\tanh\lambda_i)=e^{\pm\lambda_i}$, and the orthogonal mixers are isometries, so the composition satisfies~\eqref{eq:bilip} with $\lambda=\sum_i\lambda_i$ and
\begin{equation}
    \mu=e^{-\lambda},\qquad \nu=e^{\lambda},\qquad \lambda\ge0 .
    \label{eq:reciprocal}
\end{equation}
Consequently, raising $\lambda$ buys larger contractions and expansions.
Propositions~\ref{prop:block} and~\ref{prop:compose} prove~\eqref{eq:reciprocal}; Appendix~\ref{app:arch} gives the architecture.

We take a closed-form bound instead of the tighter certificate of
\citet{wang2024monotone} because of what the map has to do. It acts on a $C\times h\times w$ tensor, so the bound has to compose over convolutions. It is inverted once per generated sample inside the sampling loop, so the inverse has to be a plain fixed-point iteration. Additionally, the budget is a constraint we set to preserve important properties of the original latents, so the bound being conservative does not matter to us.

\paragraph{Choosing $\lambda$ from the data.}
The budget has to be large enough for the adapter to reproduce the teacher's own relative distances. For a triplet that the teacher orders, write $\rho_z=d_z(i,j)/d_z(i,k)$ for the ratio in latent space and $\rho_\phi=d_\phi(i,j)/d_\phi(i,k)\le1$ for the ratio the teacher assigns.
Reaching the teacher's margin on that triplet needs
\begin{equation}
    \lambda\ \ge\ \tfrac12\max\bigl(0,\ \log\rho_z-\log\rho_\phi\bigr),
    \label{eq:lambda_lower}
\end{equation}
which is necessary locally per triplet.
(Proposition~\ref{prop:triplet} and Remark~\ref{rem:triplet}).
Sampling $1{,}500$ anchors with $20$ teacher neighbours and $20$ random candidates each, the
$99$th percentile of the right-hand side is roughly $0.5$ for both datasets, with under $0.01$ of movement across three sampling seeds. We adopt
$\lambda=0.5$ for both datasets, which covers $99.3\%$ and $98.5\%$ of sampled
triplets and gives $\mu=0.607$ and $\nu=1.649$. Section~\ref{sec:q3} sweeps
$\lambda\in\{0.25,0.5,1.0,2.0\}$ and shows that the method works across this range.

\subsection{Flow matching in the adapted coordinates}
\label{sec:paths}

Once the adapter is frozen, the generator must choose how to incorporate it into its
flow-matching path. To state the paths actually used in the experiments, let
$h_\theta=N\circ g_\theta$, where $N$ is the fixed preprocessing of
Section~\ref{sec:setup}. This notation records the generator's coordinates; $N$ is not learned by
our method and is not included in the adapter certificate. Write \(z\) for a clean latent,
\(\varepsilon\sim\mathcal N(0,I)\) for Gaussian noise, and \(t\) for the path parameter of
Section~\ref{sec:prelim}, with \(t=0\) clean and
\[
z_t=(1-t)z+t\varepsilon.
\]
Our main construction is the pushforward path
\begin{equation}
    \begin{aligned}
    x_t &= h_\theta(z_t), &
    u_t &= J_{h_\theta}(z_t)\,(\varepsilon-z),
    \end{aligned}
    \label{eq:pushforward}
\end{equation}
which transports the entire reference path through the frozen adapter and fixed preprocessing.
Its clean endpoint is therefore \(h_\theta(z)\), while its source is the pushed-forward Gaussian
\[
x_1=h_\theta(\varepsilon),
\qquad
x_1\sim h_{\theta\#}\mathcal N(0,I).
\]
Thus the source, target, and all intermediate states are expressed in the coordinates induced by
$h_\theta$, and the corresponding velocity is obtained by a forward-mode Jacobian--vector
product. Proposition~\ref{prop:velocity} bounds the adapter's contribution before the fixed preprocessing.

As an ablation, we also consider endpoint interpolation,
\begin{equation}
    \begin{aligned}
    x_t &= (1-t)\,h_\theta(z)+t\varepsilon, &
    u_t &= \varepsilon-h_\theta(z),
    \end{aligned}
    \label{eq:endpoint}
\end{equation}
which adapts only the clean endpoint while retaining the standard Gaussian source. Both define valid flow-matching objectives \citep{lipman2023flow} and share the same adapted target distribution at \(t=0\), but they induce different source distributions, intermediate geometries, and regression targets (see Table~\ref{tab:path_deployment} for an empirical comparison).

\section{Experiments}
\label{sec:exp}

\subsection{Setup and evaluation}
\label{sec:setup}

\paragraph{Datasets.} Our main dataset is a single-label filtered subset of MIMIC-CXR~\citep{johnson2019mimic}, consisting of eight disease classes split between 66,586 train samples, 8,261 validation samples and 8,361 test samples.
As alternative medical modality, we additionally report results for a single-label slices split of CT-RATE~\citep{hamamci2026generalist}, consisting of 13,309 train samples, 2,229 validation samples and 6,767 test samples across 15 disease classes.
Both datasets exhibit heavily long-tailed class distributions.

\paragraph{Metrics.} On these datasets, we measure the sample quality using Fréchet Inception Distance (FID)~\citep{heusel2018gan} and the diversity via the Image Retrieval Score (IRS)~\citep{dombrowski2025irs}. We report these measures in independent domain-specific feature spaces and domain-unspecific spaces, to capture both image realism and fidelity. The domain-specific extractors are a CheXNet-style DenseNet-121 classifier~\citep{huang2017densely,rajpurkar2017chexnet} for MIMIC-CXR and a DenseNet-121 classifier trained on the CT-RATE training split for CT-RATE. For general realism, we use DINOv2 features~\citep{oquab2023dinov2}.

For downstream utility, we train a DenseNet-121 on the synthetic images and evaluate it on the real test set (TSTR). Since the datasets are long-tailed, we report macro $F_1$ and balanced accuracy. For counterfactual edits, we use a TorchXRayVision~\citep{cohen2022xrv} DenseNet-121 as the oracle for edit strength. To measure minimality, we report LPIPS~\citep{zhang2018perceptual} to the factual and the change of the non-targeted findings, and we measure target realism by the FID to real images of the target class.

\paragraph{Training \& inference.} We use FLUX.2 latents for training the models, since \citet{dombrowski2026learnability} showed that they exhibit the best overall learnability among the representations they tested. The adapters are trained for $200$ epochs, which is a crude heuristic choice, since the loss usually saturates after only a couple of epochs. However, adapter matching is cheap: the adapter only consists of $36{,}871$ parameters (about $405$ times fewer than the REPA projection head), so even $200$ epochs take less compute than the autoencoder-adaptation baselines, which fine-tune the full autoencoder for $5$ (VA-VAE, full fine-tuning) or $10$ (ReaLS) epochs.
Before flow-matching training, every latent representation, including the unadapted baseline, is
centered and scaled per channel to standard deviation $0.5$ using statistics from its training
cache. This fixed affine preprocessing is inverted before applying $g_\theta^{-1}$ and decoding.

The FM models are trained on MIMIC-CXR for $10$ flow-matching epochs, while CT-RATE uses $30$ due to its lower number of samples. Each model is trained with seeds $0$, $1$ and $2$, without classifier-free guidance.
Every generator produces three seeded, class-balanced sets of $50{,}000$ images per FM seed, which are used for the feature metrics and as TSTR training sets.

\subsection{Main Results}
\label{sec:q1}

\begin{table*}[t]
    \centering
    \small
    \setlength{\tabcolsep}{4pt}
    \renewcommand{\arraystretch}{1.15}
    \resizebox{\textwidth}{!}{%
    \begin{tabular}{@{}l rr rr rr@{}}
        \toprule
        & \multicolumn{2}{c}{FID $\downarrow$} & \multicolumn{2}{c}{IRS $\uparrow$} & \multicolumn{2}{c}{TSTR} \\
        \cmidrule(lr){2-3}\cmidrule(lr){4-5}\cmidrule(l){6-7}
        \textbf{Method} & domain & DINOv2 & domain & DINOv2 & bAcc $\uparrow$ & macro $F_1$ $\uparrow$ \\
        \midrule
        \multicolumn{7}{@{}l}{\textsc{mimic-cxr}\;\; \textcolor{rulegray}{\scriptsize CXR-FID}} \\[1pt]
        \arrayrulecolor{rulegray}\cmidrule(r{2pt}){1-7}\arrayrulecolor{black}
        No adaptation & $20.16\sd{7.50}$ & $28.09\sd{13.74}$ & $0.315\sd{.103}$ & $0.412\sd{.115}$ & $0.557\sd{.021}$ & $0.168\sd{.046}$ \\[2pt]
        REPA~\citep{yu2025repa} & $19.34\sd{2.90}$ & $26.13\sd{9.16}$ & $0.331\sd{.060}$ & $0.430\sd{.111}$ & $\mathbf{0.590}\sd{.012}$ & $\mathbf{0.233}\sd{.010}$ \\[2pt]
        VA-VAE~\citep{yao2025reconstruction} & $\mathbf{15.05}\sd{2.92}$ & $\second{23.56}\sd{9.07}$ & $\mathbf{0.414}\sd{.063}$ & $\second{0.463}\sd{.089}$ & $0.563\sd{.024}$ & $0.179\sd{.032}$ \\
        ReaLS~\citep{xu2025exploring} & $19.05\sd{1.35}$ & $185.59\sd{6.90}$ & $0.207\sd{.018}$ & $0.121\sd{.002}$ & $0.544\sd{.005}$ & $0.094\sd{.014}$ \\
        Full VAE fine-tuning & $39.32\sd{34.49}$ & $441.29\sd{708.44}$ & $0.239\sd{.184}$ & $0.262\sd{.216}$ & $\second{0.570}\sd{.023}$ & $\second{0.210}\sd{.024}$ \\[3pt]
        \oursrow CKA adapter \textbf{(ours)} & $\second{17.87}\sd{.69}$ & $\mathbf{22.27}\sd{.84}$ & $\second{0.353}\sd{.010}$ & $\mathbf{0.478}\sd{.010}$ & $\second{0.570}\sd{.009}$ & $0.194\sd{.017}$ \\
        \midrule
        \multicolumn{7}{@{}l}{\textsc{ct-rate}\;\; \textcolor{rulegray}{\scriptsize CT-FID}} \\[1pt]
        \arrayrulecolor{rulegray}\cmidrule(r{2pt}){1-7}\arrayrulecolor{black}
        No adaptation & $20.23\sd{.16}$ & $186.9\sd{7.0}$ & $\second{0.865}\sd{.008}$ & $0.558\sd{.019}$ & $\mathbf{0.528}\sd{.011}$ & $0.084\sd{.013}$ \\[2pt]
        REPA~\citep{yu2025repa} & $\second{20.15}\sd{3.59}$ & $\second{143.8}\sd{66.6}$ & $\second{0.865}\sd{.032}$ & $\second{0.670}\sd{.201}$ & $\second{0.526}\sd{.017}$ & $0.071\sd{.025}$ \\[2pt]
        VA-VAE~\citep{yao2025reconstruction} & $20.67\sd{6.19}$ & $168.0\sd{83.4}$ & $0.863\sd{.066}$ & $0.636\sd{.182}$ & $\second{0.526}\sd{.011}$ & $\second{0.087}\sd{.028}$ \\
        ReaLS~\citep{xu2025exploring} & $97.78\sd{6.89}$ & $359.3\sd{36.0}$ & $0.448\sd{.012}$ & $0.355\sd{.048}$ & $0.521\sd{.005}$ & $0.051\sd{.009}$ \\
        Full VAE fine-tuning & $26.38\sd{2.56}$ & $245.7\sd{20.7}$ & $0.807\sd{.018}$ & $0.480\sd{.039}$ & $\second{0.526}\sd{.008}$ & $\mathbf{0.092}\sd{.018}$ \\[3pt]
        \oursrow CKA adapter \textbf{(ours)} & $\mathbf{15.46}\sd{1.38}$ & $\mathbf{111.9}\sd{39.2}$ & $\mathbf{0.890}\sd{.024}$ & $\mathbf{0.745}\sd{.094}$ & $\mathbf{0.528}\sd{.009}$ & $0.086\sd{.023}$ \\
        \bottomrule
    \end{tabular}}
    \caption{\textbf{Generation quality and train-on-synthetic (TSTR) utility.} Domain FID uses CheXNet features against the training set on MIMIC-CXR and a CT-trained DenseNet-121 against the test set on CT-RATE, and the DINOv2 columns use ViT-B/14 features. IRS is the image retrieval score normalized by that of the real test set. TSTR trains a DenseNet-121 on $50{,}000$ class-balanced synthetic images and evaluates it on the real test set. Mean and sample SD over three flow-matching seeds, with generation and classifier seeds averaged within each flow-matching seed; best in bold, second best underlined.}
    \label{tab:main}
\end{table*}

Table~\ref{tab:main} compares the adapter with the unchanged latent space, online alignment with REPA, and three ways of adapting the autoencoder itself. No method is best on every column, but the adapter is the only one that ranks first or second on nearly all of them on both datasets. On CT-RATE it is best on all four feature metrics, lowering CT-FID from $20.23$ to $15.46$ and DINOv2 FID from $186.9$ to $111.9$, and it matches the best balanced accuracy. On MIMIC-CXR it is best in the DINOv2 feature space and second in the domain feature space, where VA-VAE leads, while REPA gives the strongest train-on-synthetic scores. Neither advantage carries over to CT-RATE, where VA-VAE no longer improves CT-FID over the unadapted model and REPA falls below it in macro $F_1$. Adapting the autoencoder is also the least reliable option: ReaLS degrades DINOv2 FID on both datasets, and full fine-tuning diverged in one of three MIMIC-CXR runs.

On MIMIC-CXR, our adapter is also the most stable method: the SD of CXR-FID over flow-matching seeds is $0.69$, against $7.50$ without adaptation and about $2.9$ for REPA and VA-VAE; for DINOv2 FID it is $0.84$, against at least $6.90$ for every other method. Figure~\ref{fig:seed_stability} shows that the same holds relative to the mean, for sample quality as well as for the downstream metrics.

\subsection{Counterfactual controllability}
\label{sec:cf}

\begin{table*}[t]
\begin{minipage}[t]{0.45\textwidth}
\vspace{0pt}
    \centering
    \scriptsize
    \setlength{\tabcolsep}{1.8pt}
    \renewcommand{\arraystretch}{1.08}
    \resizebox{\linewidth}{!}{%
    \begin{tabular}{@{}lrrrr@{}}
        \toprule
        \textbf{Intervention} & $\Delta p$ $\uparrow$ & off-target $\downarrow$ & LPIPS $\downarrow$ & FID $\downarrow$ \\
        \midrule
        Add: no adapter & $.172\sd{.035}$ & $.149\sd{.028}$ & $.068\sd{.013}$ & $2.42\sd{.31}$ \\
        \oursrow Add: ours & $.192\sd{.017}$ & $.168\sd{.012}$ & $.086\sd{.010}$ & $2.22\sd{.12}$ \\
        \addlinespace[2pt]
        Remove: no adapter & $.098\sd{.016}$ & $.106\sd{.016}$ & $.078\sd{.014}$ & $.89\sd{.11}$ \\
        \oursrow Remove: ours & $.113\sd{.008}$ & $.121\sd{.005}$ & $.097\sd{.005}$ & $.74\sd{.05}$ \\
        \bottomrule
    \end{tabular}}
    \caption{\textbf{Counterfactual edits on MIMIC-CXR under the TXRV oracle.} Target-probability change, mean absolute change of the other findings against an edit to the factual's own class, LPIPS to the factual, and target-class FID in TXRV features; mean and sample SD over three flow-matching seeds. Our adapter makes a stronger, more target-realistic, but less selective edit.}\label{tab:cf_metrics}
\end{minipage}\hfill
\begin{minipage}[t]{0.52\textwidth}
\vspace{0pt}
    \centering
    \begin{tikzpicture}
    \begin{groupplot}[
      group style={group size=2 by 1, horizontal sep=1.0cm},
      width=0.51\linewidth, height=4.25cm,
      scaled ticks=false,
      yticklabel style={/pgf/number format/fixed, /pgf/number format/precision=2},
      xticklabel style={/pgf/number format/fixed, /pgf/number format/precision=1},
      ylabel={target $\Delta p$ $\uparrow$},
      grid=major, grid style={gray!18},
      tick label style={font=\scriptsize},
      label style={font=\small}, title style={font=\small},
      error bars/x dir=both, error bars/x explicit,
      error bars/y dir=both, error bars/y explicit,
      error bars/error bar style={line width=0.45pt},
    ]
    \nextgroupplot[
      title={Finding addition},
      xmin=1.75, xmax=2.90, ymin=0.12, ymax=0.235,
      legend to name=cflegend,
      legend columns=2,
      legend style={font=\scriptsize, draw=gray!40, fill=white, /tikz/every even column/.append style={column sep=6pt}},
    ]
    \addplot[only marks, mark=*, mark size=2.3pt, black]
      coordinates {(2.421425,0.171659) +- (0.311142,0.035369)};
    \addlegendentry{No adapter}
    \addplot[only marks, mark=square*, mark size=2.5pt, terracottadark]
      coordinates {(2.219908,0.191736) +- (0.115118,0.017251)};
    \addlegendentry{CKA adapter}
    \draw[-{Stealth[length=3pt]}, gray!65, thick]
      (axis cs:2.421425,0.171659) -- (axis cs:2.219908,0.191736);

    \nextgroupplot[
      title={Finding removal},
      xmin=0.52, xmax=1.18, ymin=0.055, ymax=0.145,
      ylabel={}, yticklabel pos=right,
    ]
    \addplot[only marks, mark=*, mark size=2.3pt, black]
      coordinates {(0.887410,0.098046) +- (0.106801,0.016022)};
    \addplot[only marks, mark=square*, mark size=2.5pt, terracottadark]
      coordinates {(0.744117,0.113400) +- (0.048591,0.007813)};
    \draw[-{Stealth[length=3pt]}, gray!65, thick]
      (axis cs:0.887410,0.098046) -- (axis cs:0.744117,0.113400);
    \end{groupplot}
    \node[font=\small, anchor=north] at
      ($(group c1r1.south)!0.5!(group c2r1.south)+(0,-0.48cm)$)
      {target-class FID $\downarrow$};
    \node[anchor=south] at
      ($(group c1r1.north)!0.5!(group c2r1.north)+(0,0.62cm)$)
      {\pgfplotslegendfromname{cflegend}};
    \end{tikzpicture}
    \captionof{figure}{\textbf{Effect and target realism of the edits in Table~\ref{tab:cf_metrics}.} Points show the means and bars one sample SD over three flow-matching seeds. The adapter increases target probability and lowers target-class FID in both directions.}\label{fig:counterfactual}
\end{minipage}
\end{table*}

To test counterfactual controllability, we generate counterfactuals following \citet{sanchez2022cf} and evaluate them following \citet{monteiro2023measuring}: we invert the factual latent along the learned flow, integrate it back under the target class and decode the result. Table~\ref{tab:cf_metrics} scores two edits on MIMIC-CXR, adding a finding to a No Finding radiograph and removing the finding from a diseased one, with a TorchXRayVision classifier that shares no training with the generator or the teacher. The adapter makes, on average, the stronger edit in both directions, raising the target probability by $0.192$ against $0.172$ for addition and by $0.113$ against $0.098$ for removal, and its edits lie closer to real images of the target class (Figure~\ref{fig:counterfactual}). The stronger edit also changes more of the image, with slightly higher LPIPS and off-target movement, so the adapter trades some selectivity for edit strength. As in generation, its spread across flow-matching seeds is about half that of the baseline.

\subsection{Teacher Selection}
\label{sec:q2}

The teacher decides which semantic structure the adapter imposes, so we fixed it before training any generator, from training data alone (Appendix~\ref{app:teacher}, Table~\ref{tab:teacher_selection}). MedImageInsight leads narrowly on MIMIC-CXR and MedSigLIP leads clearly on CT-RATE. The generators follow the size of these margins. On MIMIC-CXR, exchanging MedImageInsight for MedSigLIP keeps every column of Table~\ref{tab:path_matrix} within one seed SD. On CT-RATE, training against MedImageInsight instead (generation seed $0$) raises CT-FID from $15.46$ to $19.13$ and roughly halves macro $F_1$, from $0.086$ to $0.046$. Pooling the adapted latent to $8\times8$ before the CKA lowers the mean CXR-FID to $14.13$, but at three times the seed spread, a trade-off that recurs for the endpoint path below.


\begin{table*}[t]
\begin{minipage}[t]{0.49\textwidth}
\vspace{0pt}
        \centering
    \setlength{\tabcolsep}{3pt}
    \resizebox{\linewidth}{!}{%
    \begin{tabular}{@{}lcrrrr@{}}
        \toprule
        Teacher & Pool & FID $\downarrow$ & IRS $\uparrow$ & bAcc $\uparrow$ & macro $F_1$ $\uparrow$ \\
        \midrule
        MII & 8 & $\mathbf{14.13}\sd{2.00}$ & $\mathbf{.524}\sd{.046}$ & $.567\sd{.017}$ & $.183\sd{.043}$ \\
        MedSigLIP & 0 & $18.00\sd{.53}$ & $\second{.478}\sd{.021}$ & $\mathbf{.574}\sd{.016}$ & $\second{.185}\sd{.028}$ \\
        \midrule
        \oursrow MII & 0 & $\second{17.87}\sd{.69}$ & $\second{.478}\sd{.010}$ & $\second{.570}\sd{.009}$ & $\mathbf{.194}\sd{.017}$ \\
        \bottomrule
    \end{tabular}}
    \caption{\textbf{Teacher and pooling ablation on MIMIC-CXR.} MII is MedImageInsight~\citep{codella2024medimage}; pool $0$ applies CKA to the full flattened latent, pool $8$ to the latent average-pooled to $8\times8$. The configuration used everywhere else is shaded; best in bold, second best underlined; seeds aggregated as in Table~\ref{tab:main} (MedSigLIP: generation seed $0$ only). All rows use $\lambda=0.5$ and the pushforward path.}\label{tab:path_matrix}
\end{minipage}\hfill
\begin{minipage}[t]{0.48\textwidth}
\vspace{0pt}
        \centering
    \begin{tikzpicture}
    \begin{axis}[
        width=\linewidth,
        height=4.0cm,
        ybar,
        bar width=2.4pt,
        ymin=0,
        ymax=0.95,
        ylabel={seed CV $\downarrow$},
        symbolic x coords={CXR-FID,bAcc,macroF1},
        xtick=data,
        xticklabels={CXR-FID,bAcc,macro $F_1$},
        enlarge x limits=0.18,
        grid=major,
        grid style={gray!18},
        tick label style={font=\scriptsize},
        label style={font=\small},
        legend style={font=\tiny, at={(0.5,1.03)}, anchor=south, legend columns=2, draw=gray!40},
    ]
    \addplot[fill=black!20, draw=black!55] coordinates {(CXR-FID,0.372) (bAcc,0.038) (macroF1,0.272)};
    \addlegendentry{No adapter}
    \addplot[fill=gray!55, draw=gray!75] coordinates {(CXR-FID,0.150) (bAcc,0.020) (macroF1,0.042)};
    \addlegendentry{REPA}
    \addplot[fill=gray!35, draw=gray!60] coordinates {(CXR-FID,0.194) (bAcc,0.042) (macroF1,0.178)};
    \addlegendentry{VA-VAE}
    \addplot[fill=gray!75, draw=gray!90] coordinates {(CXR-FID,0.877) (bAcc,0.041) (macroF1,0.113)};
    \addlegendentry{Full VAE FT}
    \addplot[fill=terracotta!62, draw=terracottadark] coordinates {(CXR-FID,0.420) (bAcc,0.037) (macroF1,0.194)};
    \addlegendentry{CKA endpoint}
    \addplot[fill=terracottadark, draw=terracottadark] coordinates {(CXR-FID,0.039) (bAcc,0.016) (macroF1,0.089)};
    \addlegendentry{CKA pushforward}
    \end{axis}
    \end{tikzpicture}
    \captionof{figure}{\textbf{Relative variability over three MIMIC-CXR FM seeds.} Bars show sample SD divided by the mean. Pushforward has lower variability than endpoint, no adaptation, and both VAE baselines on all three metrics, while REPA has lower macro-$F_1$ variability.}\label{fig:seed_stability}
\end{minipage}
\end{table*}

\subsection{Ablations \& Analysis}
\label{sec:q3}

\begin{table*}[t]
\begin{minipage}[t]{0.45\textwidth}
\vspace{0pt}
    \centering
    \small
    \setlength{\tabcolsep}{1.5pt}
    \renewcommand{\arraystretch}{1.1}
    \begin{tabular}{@{}lccc@{}}
        \toprule
        \textbf{FM path} & CXR-FID $\downarrow$ & bAcc $\uparrow$ & $F_1$ $\uparrow$ \\
        \midrule
        Endpoint & $\mathbf{15.94}\sd{6.69}$ & $.565\sd{.021}$ & $.173\sd{.034}$ \\
        \oursrow Pushforward & $17.87\sd{.69}$ & $\mathbf{.570}\sd{.009}$ & $\mathbf{.194}\sd{.017}$ \\
        \bottomrule
    \end{tabular}
    \caption{\textbf{Flow-matching path on MIMIC-CXR.} The same frozen adapter, three flow-matching seeds (endpoint: generation seed $0$ only).}\label{tab:path_deployment}
\end{minipage}\hfill
\begin{minipage}[t]{0.51\textwidth}
\vspace{0pt}
    \centering
    \scriptsize
    \setlength{\tabcolsep}{3pt}
    \renewcommand{\arraystretch}{1.1}
    \begin{tabular}{@{}lcccc@{}}
        \toprule
        $\lambda$ & certified $[\mu,\nu]$ & FID $\downarrow$ & IRS $\uparrow$ & $F_1$ $\uparrow$ \\
        \midrule
        $0.25$ & $[.779,1.284]$ & $11.45$ & $.518$ & $.189$ \\
        $1.00$ & $[.368,2.718]$ & $17.22$ & $.477$ & $.137$ \\
        $2.00$ & $[.135,7.389]$ & $16.48$ & $.483$ & $.173$ \\
        \midrule
        \oursrow $0.50$ & $[.607,1.649]$ & $12.48$ & $.570$ & $.123$ \\
        \bottomrule
    \end{tabular}
    \caption{\textbf{Distortion budget on MIMIC-CXR.} MedImageInsight, pool $8$, flow-matching and generation seed $0$. $\lambda=0.5$ was fixed before the sweep.}\label{tab:lambda}
\end{minipage}
\end{table*}

\begin{table*}[t]
\begin{minipage}[t]{0.54\textwidth}
\vspace{0pt}

    \centering
    \scriptsize
    \setlength{\tabcolsep}{2.3pt}
    \renewcommand{\arraystretch}{1.04}
    \begin{tabular}{@{}llrrrr@{}}
        \toprule
        \textbf{Map} & \textbf{conditioner} & FID $\downarrow$ & IRS $\uparrow$ & bAcc $\uparrow$ & $F_1$ $\uparrow$ \\
        \midrule
        Additive & controlled & $19.63$ & $.415$ & $\mathbf{.582}$ & $\mathbf{.210}$ \\
        Affine & controlled & $\mathbf{17.13}$ & $\mathbf{.477}$ & $.576$ & $\mathbf{.210}$ \\
        RQ spline & controlled & $113.98$ & $.006$ & $.520$ & $.113$ \\
        Additive & free & $102.38$ & $.005$ & $.524$ & $.124$ \\
        Affine & free & $58.83$ & $.031$ & $.508$ & $.031$ \\
        RQ spline & free & $724.44$ & $.000$ & $.503$ & $.029$ \\
        \midrule
        \oursrow Residual & certified & $18.33$ & $.464$ & $.572$ & $.202$ \\
        \bottomrule
    \end{tabular}
    \caption{\textbf{Adapter family on MIMIC-CXR with the clean-only objective.} At flow-matching and generation seed 0 with three classifier seeds. We use invertible residual blocks~\citep{behrmann2019invertible,chen2019residual}, additive coupling~\citep{dinh2015nice}, affine coupling~\citep{dinh2017density,kingma2018glow}, and rational-quadratic splines~\citep{durkan2019neural}.}\label{tab:family}
\end{minipage}\hfill
\begin{minipage}[t]{0.43\textwidth}
\vspace{0pt}

    \centering
    \scriptsize
    \setlength{\tabcolsep}{2.0pt}
    \begin{tabular}{@{}lccc@{}}
        \toprule
        Map & bound & $\hat\sigma_{\max}$ & PR \\
        \midrule
        Certified residual & $[.607,1.649]$ & $1.042$--$1.056$ & $.9997$ \\
        Affine coupling, free & none & $363.2$ & $.0076$ \\
        Free residual & none & $1473.9$ & $.0121$ \\
        \bottomrule
    \end{tabular}
    \caption{\textbf{Local Jacobian diagnostics on eight validation anchors.} Only the certified residual has an analytic two-sided bound.}\label{tab:geometry}
    \medskip
    \centering
    \small
    \renewcommand{\arraystretch}{1.12}
    \begin{tabular*}{0.9\linewidth}{@{\extracolsep{\fill}}lr@{}}
        \toprule
        Measurement & Value \\
        \midrule
        Forward & $0.40$ ms \\
        Inverse & $0.69$ ms \\
        Round-trip error & $4.3\times10^{-6}$ \\
        Share of generation & $0.075\%$ \\
        \bottomrule
    \end{tabular*}
    \caption{\textbf{Per-sample adapter cost and numerical inversion error.}}\label{tab:inversion}
\end{minipage}
\end{table*}

\paragraph{Flow-matching path.} Endpoint interpolation reaches a lower mean CXR-FID than the pushforward path, but its SD over flow-matching seeds is nearly ten times larger, and it is lower on both downstream columns (Table~\ref{tab:path_deployment}). Keeping the source and every intermediate state in adapted coordinates is what makes the result repeatable, so we use the pushforward path throughout.

\paragraph{Distortion budget \& Inversion Cost.} Table~\ref{tab:lambda} sweeps $\lambda$ around the value estimated in Section~\ref{sec:lambda}. Every budget from $0.25$ to $2$ yields comparable results, with CXR-FID between $11.45$ and $17.22$ and no trend in the downstream scores beyond the seed spread of Table~\ref{tab:main}. The budget therefore needs no tuning beyond the triplet estimate. Inverting the adapter costs $0.69$ ms per sample ($0.075\%$ of generation) at a round-trip error of $4.3\times10^{-6}$ (Table~\ref{tab:inversion}).

\paragraph{Adapter family.} Table~\ref{tab:family} replaces our residual adapter with coupling flows. Couplings whose conditioner is norm-controlled perform on par with it, while the same couplings with a free conditioner fail as generators, with CXR-FID between $58.8$ and $724.4$ and IRS near zero, although they remain invertible. Table~\ref{tab:geometry} gives context for this: the free maps stretch some directions by more than $360$ and are close to rank-deficient, whereas the certified adapter stretches no direction by more than $6\%$. Controlled couplings therefore work as well, but only the residual adapter carries a certificate for the full map.

\section{Discussion}
\label{sec:discussion}

The results show that a small, frozen change of coordinates is competitive with methods that retrain the autoencoder or alter generator training, while being more stable across seeds than the other adaptation methods. Three properties make our approach practical: it has a single hyperparameter with a direct meaning, the distortion budget $\lambda$, which can be estimated from the data before training and which works across an eightfold range. The teacher can likewise be chosen from training data alone, and on both datasets this choice agreed with the downstream results. Finally, the adapter is cheap: it is fitted once in a few GPU-hours, reused by every flow-matching run and inverted in under a millisecond, and it leaves the autoencoder and the flow-matching loss untouched. The ablations indicate where the stability comes from. Invertibility alone is not enough, since invertible maps without Lipschitz control become nearly rank-deficient and fail as generators, and adapting only the clean endpoint gives good but unreliable runs. Bounding the distortion in both directions and keeping the whole path in the adapted coordinates makes the gains repeatable.

\paragraph{Limitations.}
We implemented the autoencoder-adaptation baselines with a medical teacher and their recommended settings, but more tuning could strengthen some of them, although needing it is itself a cost that can be avoided with our method. The adapter can only impose structure that its teacher carries, so it depends on a teacher with a good label geometry for the target domain, as the CT-RATE teacher comparison shows. Finally, the stronger counterfactual edits come with slightly larger off-target changes, so the adapter mainly improves edit strength.

\section{Conclusion}
\label{sec:conclusion}
We introduced an invertible bi-Lipschitz adapter that aligns pretrained latents with a semantic teacher offline and stays frozen during flow matching. On MIMIC-CXR and CT-RATE, it is competitive with generator alignment and autoencoder adaptation, more stable across seeds, and a small fraction of their cost. Its only hyperparameter can be estimated from the data, and its teacher can be selected before any generator is trained. Constrained post-hoc adaptation therefore offers a simple way to make a fixed latent space easier to learn, without retraining the autoencoder or changing the generative objective.

\subsection*{Acknowledgments}
The authors gratefully acknowledge the scientific support and HPC resources provided by the Erlangen National High Performance Computing Center (NHR@FAU) of the Friedrich-Alexander-Universit\"at Erlangen-N\"urnberg (FAU) under the NHR project b180dc. NHR funding is provided by federal and Bavarian state authorities. NHR@FAU hardware is partially funded by the German Research Foundation (DFG) -- 440719683.

\subsection*{Reproducibility statement}
In Section~\ref{sec:setup}, we detail the main setup needed to reproduce our results. Additionally, we provide our code in the supplementary material, with scripts to conveniently reproduce our environment and results, as well as tables used in this paper. The specific details behind our architecture is detailed in Appendix~\ref{app:arch}. Claims made in the main paper are supported by proofs in Appendix~\ref{app:proofs}.

\bibliography{iclr2027_conference}
\bibliographystyle{iclr2027_conference}

\appendix
\section{Adapter construction}
\label{app:arch}

\paragraph{Per-layer constants.}
Each residual branch ($r_i$) consists of ($1\times1$) pointwise and ($3\times3$) depthwise convolutions interleaved with SiLU activations. For pointwise convolutions, we rescale each weight matrix to a fixed Frobenius norm, which upper bounds its spectral norm. For depthwise convolutions, we rescale each kernel to a fixed ($\ell_1$) norm, which upper bounds the induced convolution operator norm by Young's inequality. Let ($L_{\mathrm{SiLU}}\approx 1.1$) denote the Lipschitz constant of SiLU~\citep{zhang2026general}. We normalize every hidden convolution to have operator norm at most ($1/L_{\mathrm{SiLU}}$); consequently, the composition of each hidden convolution with its subsequent SiLU activation is (1)-Lipschitz. The output convolution is instead normalized to have operator norm at most ($\tanh\lambda_i$). Multiplying these per-layer bounds therefore gives
$\operatorname{Lip}(r_i)\leq \tanh\lambda_i$.
This bound is enforced directly through the weight normalizations, requires no power iteration, and holds both at initialization and after every parameter update.

\paragraph{Bijectivity of $\mathbb{R}^m$.}
The singular-value characterization in Section~\ref{sec:prelim} requires a diffeomorphism of
$\mathbb{R}^m$ onto $\mathbb{R}^m$, not onto its image, so injectivity alone is not enough.
Contraction supplies the rest. For any target $y$ the map $u\mapsto y-r_i(u)$ contracts
$\mathbb{R}^m$ with factor $\tanh\lambda_i<1$, so the Banach fixed-point theorem gives a unique
$u$ with $u+r_i(u)=y$, and $x=u/c_i$ solves $B_i(x)=y$. Every block is therefore a bijection of
$\mathbb{R}^m$ onto itself. The Cayley-parameterized mixers are orthogonal and hence isometries of
$\mathbb{R}^m$ onto itself, so the composition inherits both invertibility and surjectivity.

\paragraph{Numerical inversion.}
We invert each block by the fixed-point iteration $b\leftarrow y-s\,r_i(b)$ implied by the same
contraction, capped at $50$ steps with a tolerance of $10^{-6}$, where $s$ is the block's gate.
Table~\ref{tab:inversion} gives the realized round-trip error and the per-sample latency.

\section{Proofs}
\label{app:proofs}

Each statement below is used in Section~\ref{sec:method} and proved here. Throughout,
$g_\theta$ is the adapter of Section~\ref{sec:lambda}.

\subsection{The budget of the composed adapter}

\begin{proposition}[Block bounds]
\label{prop:block}
Let $B(x)=cx+r(cx)$ with $c>0$ and $\operatorname{Lip}(r)\le\gamma<1$. Then $B$ is
$\bigl(c(1-\gamma),\,c(1+\gamma)\bigr)$-bi-Lipschitz. Taking $c=\cosh\lambda$ and
$\gamma=\tanh\lambda$ gives $B$ the bounds $\bigl(e^{-\lambda},e^{\lambda}\bigr)$.
\end{proposition}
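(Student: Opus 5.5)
The plan is to bound $\|B(x)-B(y)\|$ on both sides directly from the definition~\eqref{eq:bilip} using the triangle inequality. We never differentiate $r$, so no smoothness is needed. Fix $x,y\in\mathbb{R}^m$ and write $u=cx$, $v=cy$, so that $\|u-v\|=c\|x-y\|$ because $c>0$. Then
\[
B(x)-B(y)=(u-v)+\bigl(r(u)-r(v)\bigr),
\qquad
\|r(u)-r(v)\|\le\gamma\|u-v\|=c\gamma\|x-y\|.
\]

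For the upper bound, apply the triangle inequality:
\[
\|B(x)-B(y)\|\le\|u-v\|+\|r(u)-r(v)\|\le c(1+\gamma)\|x-y\|.
\]
For the lower bound, use the reverse triangle inequality:
\[
\|B(x)-B(y)\|\ge\|u-v\|-\|r(u)-r(v)\|\ge c(1-\gamma)\|x-y\|.
\]
Since $\gamma<1$, the constant $c(1-\gamma)$ is strictly positive, and clearly $c(1-\gamma)\le c(1+\gamma)$. This matches the requirement $0<\mu\le\nu$ in~\eqref{eq:bilip}, and it completes the first claim.

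For the specialization, substitute $c=\cosh\lambda$ and $\gamma=\tanh\lambda$. For $\lambda\ge0$ we have $\tanh\lambda\in[0,1)$, so the hypothesis $\gamma<1$ holds. The only remaining step is the identity
\[
\cosh\lambda\,(1\pm\tanh\lambda)=\cosh\lambda\pm\sinh\lambda=e^{\pm\lambda},
\]
which follows directly from the exponential definitions of $\cosh$ and $\sinh$.

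There is no real obstacle in this argument. The one point to watch is that the lower bound needs $\gamma<1$ to be a genuine bi-Lipschitz constant; the statement assumes this, and for the specialization it holds for every $\lambda\ge0$. Surjectivity of $B$ is not part of this proposition. It follows separately from the Banach fixed-point argument in Appendix~\ref{app:arch}, so I would not reprove it here.
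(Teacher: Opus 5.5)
Your proposal is correct and follows the paper's proof essentially step for step. It uses the same substitution $u=cx$, $v=cy$, the triangle and reverse triangle inequalities for the two bounds, positivity from $\gamma<1$, and the identity $\cosh\lambda\,(1\pm\tanh\lambda)=e^{\pm\lambda}$. You also make explicit that $\lambda\ge0$ gives $\tanh\lambda\in[0,1)$, which the paper leaves implicit.
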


\begin{proof}
Write $u=cx$ and $v=cy$, so $B(x)-B(y)=(u-v)+\bigl(r(u)-r(v)\bigr)$. The triangle inequality
and $\operatorname{Lip}(r)\le\gamma$ give
\[
\|B(x)-B(y)\|\le\|u-v\|+\gamma\|u-v\|=c(1+\gamma)\|x-y\|,
\]
and the reverse triangle inequality gives
\[
\|B(x)-B(y)\|\ge\|u-v\|-\gamma\|u-v\|=c(1-\gamma)\|x-y\| ,
\]
which is positive because $\gamma<1$. For the stated $c$ and $\gamma$,
\[
\cosh\lambda\,(1\pm\tanh\lambda)=\cosh\lambda\pm\sinh\lambda=e^{\pm\lambda}. \qedhere
\]
\end{proof}

\begin{proposition}[Composition, Eq.~\eqref{eq:reciprocal}]
\label{prop:compose}
Let $g_\theta=Q_{K+1}\circ B_K\circ\cdots\circ B_1\circ Q_1$, where each $B_i$ satisfies
Proposition~\ref{prop:block} with budget $\lambda_i\ge0$ and each $Q_i$ is orthogonal. Then
$g_\theta$ is $(e^{-\lambda},e^{\lambda})$-bi-Lipschitz with $\lambda=\sum_i\lambda_i$.
\end{proposition}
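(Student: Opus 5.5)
The plan is to combine Proposition~\ref{prop:block} with the elementary fact that bi-Lipschitz constants multiply under composition. I would first isolate a lemma. If $f$ is $(\mu_1,\nu_1)$-bi-Lipschitz and $h$ is $(\mu_2,\nu_2)$-bi-Lipschitz, then $h\circ f$ is $(\mu_1\mu_2,\nu_1\nu_2)$-bi-Lipschitz. This follows by applying~\eqref{eq:bilip} twice, first for $h$ at the points $f(x),f(y)$ and then for $f$ at $x,y$:
\[
\mu_2\mu_1\|x-y\|\le\mu_2\|f(x)-f(y)\|\le\|h(f(x))-h(f(y))\|\le\nu_2\|f(x)-f(y)\|\le\nu_2\nu_1\|x-y\|.
\]
Nothing beyond the definition is needed, and the lemma holds for maps defined on all of $\mathbb{R}^m$.

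Next I would record the constants of the individual factors. Each orthogonal $Q_i$ satisfies $\|Q_ix-Q_iy\|=\|x-y\|$, so it is $(1,1)$-bi-Lipschitz. Each block $B_i$ has $c_i=\cosh\lambda_i$ and $\operatorname{Lip}(r_i)\le\tanh\lambda_i<1$, and Proposition~\ref{prop:block} makes it $(e^{-\lambda_i},e^{\lambda_i})$-bi-Lipschitz. The case $\lambda_i=0$ needs a brief check. There $\gamma=0$ forces $r_i$ to be constant, and the block is a translated identity, which is consistent with the bounds $(1,1)$.

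I would then argue by induction on the number of factors in $Q_{K+1}\circ B_K\circ\cdots\circ B_1\circ Q_1$, applying the lemma at each step. The lower constants multiply to
\[
\prod_i e^{-\lambda_i}\cdot 1=e^{-\sum_i\lambda_i},
\]
and the upper constants multiply to $e^{\sum_i\lambda_i}$. This gives $(e^{-\lambda},e^{\lambda})$ with $\lambda=\sum_i\lambda_i$, which is~\eqref{eq:reciprocal}.

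I expect no real obstacle here. The only point needing care is that the inequalities are global, for all $x,y$, and the definition~\eqref{eq:bilip} plus the block proposition already supply this directly. Surjectivity is not required for the bi-Lipschitz claim itself; it is handled separately in Appendix~\ref{app:arch} and can be mentioned as a remark.
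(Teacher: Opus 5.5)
Your proof is correct and matches the paper's argument: it also shows that bi-Lipschitz constants multiply under composition using the same chain of inequalities, gives each orthogonal $Q_i$ the factor $1$, and multiplies the block constants $e^{\pm\lambda_i}$ from Proposition~\ref{prop:block}. Your separate check of $\lambda_i=0$ is harmless but unnecessary, since Proposition~\ref{prop:block} already gives the bounds $(1,1)$ in that case.
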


\begin{proof}
If $f_1$ is $(\mu_1,\nu_1)$-bi-Lipschitz and $f_2$ is $(\mu_2,\nu_2)$-bi-Lipschitz, then for all
$x,y$,
\[
\mu_2\mu_1\|x-y\|\le\mu_2\|f_1(x)-f_1(y)\|\le\|f_2(f_1(x))-f_2(f_1(y))\|\le\nu_2\nu_1\|x-y\| ,
\]
so the constants multiply. An orthogonal $Q$ satisfies $\|Qx-Qy\|=\|x-y\|$ and contributes the
factor $1$. Multiplying the block constants of Proposition~\ref{prop:block} gives
$\prod_i e^{\pm\lambda_i}=e^{\pm\sum_i\lambda_i}$.
\end{proof}

\begin{proposition}[Pushforward velocity, Eq.~\eqref{eq:pushforward}]
\label{prop:velocity}
Let $g_\theta$ be differentiable. Then for every $z_t$ and every $v\in\mathbb{R}^m$,
\[
e^{-\lambda}\|v\|\;\le\;\|J_{g_\theta}(z_t)\,v\|\;\le\;e^\lambda\|v\| .
\]
In particular the pushforward velocity $u_t=J_{g_\theta}(z_t)(\varepsilon-z)$ satisfies
$e^{-\lambda}\|\varepsilon-z\|\le\|u_t\|\le e^\lambda\|\varepsilon-z\|$.
\end{proposition}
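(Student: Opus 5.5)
The plan is to derive the pointwise Jacobian bounds from the global bi-Lipschitz property of Proposition~\ref{prop:compose}, taking directional derivatives as limits of difference quotients. Fix $z_t$ and $v\in\mathbb{R}^m$; the case $v=0$ is trivial, so assume $v\neq0$. Differentiability of $g_\theta$ at $z_t$ gives
\[
J_{g_\theta}(z_t)\,v=\lim_{s\to0^+}\frac{g_\theta(z_t+sv)-g_\theta(z_t)}{s}.
\]
Apply~\eqref{eq:bilip} with $x=z_t+sv$, $y=z_t$, $\mu=e^{-\lambda}$ and $\nu=e^{\lambda}$. For every $s>0$ the difference quotient $q_s$ then satisfies
\[
e^{-\lambda}\|v\|\le\|q_s\|\le e^{\lambda}\|v\|.
\]
Both bounds are closed conditions on $q_s$ and the norm is continuous, so they pass to the limit $s\to0^+$. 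This yields the two-sided bound on $\|J_{g_\theta}(z_t)v\|$. This is exactly the forward direction of the singular-value remark in Section~\ref{sec:prelim}, which needs no diffeomorphism assumption: $\sigma_m\ge e^{-\lambda}$ and $\sigma_1\le e^{\lambda}$.

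The velocity statement then follows by substituting $v=\varepsilon-z$. The bound is pointwise in $z_t$, so it holds for every point on the path. The constants are uniform, so no integration along the path is needed.

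I expect no real obstacle. The one point to check is that differentiability is genuinely available. The residual branches use SiLU and convolutions, so they are $C^1$, and the Cayley mixers are linear. Also, the lower bound must survive the limit, which it does because $\|q_s\|\ge e^{-\lambda}\|v\|$ holds for every $s$ rather than only asymptotically.

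If one only assumed differentiability almost everywhere, the same argument would still apply at each point where the derivative exists. Finally, I will note that the statement concerns $g_\theta$ alone. The fixed preprocessing $N$ in $h_\theta=N\circ g_\theta$ is a per-channel affine map, so it would contribute separate factors given by its extreme scalings, consistent with the remark that $N$ is excluded from the certificate.
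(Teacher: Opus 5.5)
Your proof is correct and follows the paper's route. The paper cites the pointwise singular-value bounds $\sigma_m(J_{g_\theta})\ge e^{-\lambda}$ and $\sigma_1(J_{g_\theta})\le e^{\lambda}$, stated without proof in Section~\ref{sec:prelim}, for the $(e^{-\lambda},e^{\lambda})$-bi-Lipschitz adapter and substitutes $v=\varepsilon-z$; the only difference is that you prove this forward direction by passing the bi-Lipschitz bounds on difference quotients to the limit, which also shows it needs only differentiability at $z_t$ and no diffeomorphism assumption.
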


\begin{proof}
A differentiable $(\mu,\nu)$-bi-Lipschitz map has $\sigma_m(J_{g_\theta}(z))\ge e^{-\lambda}$ and
$\sigma_1(J_{g_\theta}(z))\le e^{\lambda}$ at every $z$ (Section~\ref{sec:prelim}), so
$e^{-\lambda}\|v\|\le\|J_{g_\theta}(z_t)v\|\le e^{\lambda}\|v\|$, which is the claim.
\end{proof}

\subsection{The triplet budget}

\begin{proposition}[Necessary budget, Eq.~\eqref{eq:lambda_lower}]
\label{prop:triplet}
Fix a triplet $(i,j,k)$ and write $\rho_z=d_z(i,j)/d_z(i,k)$ for the latent distance ratio and
$\rho_\phi=d_\phi(i,j)/d_\phi(i,k)$ for the teacher's. If a $(e^{-\lambda},e^{\lambda})$-bi-Lipschitz
map $g_\theta$ achieves $d_{\tilde z}(i,j)/d_{\tilde z}(i,k)\le\rho_\phi$, then
\[
\lambda\;\ge\;\tfrac12\max\bigl(0,\;\log\rho_z-\log\rho_\phi\bigr).
\]
\end{proposition}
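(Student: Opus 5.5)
The plan is to bound how much any $(e^{-\lambda},e^{\lambda})$-bi-Lipschitz map can shrink a distance ratio, using only the two-sided inequality~\eqref{eq:bilip}. Here $d_z$ is the Euclidean distance between cached latents and $d_{\tilde z}$ is the distance between their images under $g_\theta$. We assume the triplet is non-degenerate, $d_z(i,j)>0$ and $d_z(i,k)>0$, so that all ratios are defined; by injectivity the adapted distances are then positive as well.

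\textbf{Step 1 (numerator and denominator).} Apply~\eqref{eq:bilip} with $\mu=e^{-\lambda}$, $\nu=e^{\lambda}$ (Proposition~\ref{prop:compose}) separately to the two pairs. The lower bound on the pair $(i,j)$ gives
\[
d_{\tilde z}(i,j)\ge e^{-\lambda}d_z(i,j),
\]
and the upper bound on the pair $(i,k)$ gives
\[
d_{\tilde z}(i,k)\le e^{\lambda}d_z(i,k).
\]

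\textbf{Step 2 (ratio).} Dividing the two inequalities yields the universal lower bound
\[
\frac{d_{\tilde z}(i,j)}{d_{\tilde z}(i,k)}\ge e^{-2\lambda}\rho_z .
\]

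\textbf{Step 3 (chain with the hypothesis).} Combine this with the assumption $d_{\tilde z}(i,j)/d_{\tilde z}(i,k)\le\rho_\phi$ to get $e^{-2\lambda}\rho_z\le\rho_\phi$. Taking logarithms, which is valid since all quantities are positive, gives
\[
\lambda\ge\tfrac12\bigl(\log\rho_z-\log\rho_\phi\bigr).
\]
Since $\lambda\ge0$ holds by~\eqref{eq:reciprocal}, we also have $\lambda\ge0$. Taking the larger of the two lower bounds gives the stated $\max$.

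There is no real obstacle here. The only point needing care is the bookkeeping of which side of~\eqref{eq:bilip} is applied to which pair. The numerator must use the contraction bound $\mu$ and the denominator the expansion bound $\nu$, which is exactly why the factor is $e^{-2\lambda}$ and the budget carries the $\tfrac12$. A secondary point is to state the non-degeneracy assumption explicitly so the ratios and the logarithms are defined. Finally, it is worth remarking after the proof that the bound is only necessary: it uses the bi-Lipschitz inequality for the two pairs in isolation, which is why the paper calls it local per triplet.
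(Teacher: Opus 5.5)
Your proof is correct and follows the paper's argument exactly: you bound the numerator below by $e^{-\lambda}$ and the denominator above by $e^{\lambda}$, obtain $e^{-2\lambda}\rho_z\le\rho_\phi$, take logarithms and combine with $\lambda\ge0$. Your explicit non-degeneracy assumption $d_z(i,j),d_z(i,k)>0$ is a small addition that the paper leaves implicit.
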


\begin{proof}
Write $\tilde z=g_\theta(z)$. Applying~\eqref{eq:bilip} to each pair,
$e^{-\lambda}d_z(i,j)\le d_{\tilde z}(i,j)\le e^{\lambda}d_z(i,j)$ and likewise for $(i,k)$.
Dividing the smallest possible numerator by the largest possible denominator bounds the adapted
ratio from below,
\[
\frac{d_{\tilde z}(i,j)}{d_{\tilde z}(i,k)}\;\ge\;\frac{e^{-\lambda}d_z(i,j)}{e^{\lambda}d_z(i,k)}
\;=\;e^{-2\lambda}\rho_z .
\]
The assumed inequality therefore forces $e^{-2\lambda}\rho_z\le\rho_\phi$, that is
$2\lambda\ge\log\rho_z-\log\rho_\phi$. Combining with $\lambda\ge0$ gives the statement.
\end{proof}

\begin{remark}
\label{rem:triplet}
Proposition~\ref{prop:triplet} is necessary and not sufficient. It constrains $\lambda$ from
below for one triplet, and it does not assert that any map inside the budget attains the ratio,
nor that a single map attains it on all triplets at once. We therefore read the quantiles of
Section~\ref{sec:lambda} as a lower envelope on a reasonable budget rather than as a feasibility
threshold, and we measure sensitivity to the choice directly in Section~\ref{sec:q3}.
\end{remark}


\makeatletter
\setlength{\@fptop}{0pt}
\setlength{\@fpsep}{12pt plus 2fil}
\setlength{\@fpbot}{0pt plus 1fil}
\makeatother
\section{Teacher selection and qualitative samples}
\label{app:teacher_qual}

\subsection{Teacher selection}
\label{app:teacher}

We fixed the teacher of each dataset before training any adapter or generator, using only the training split. For every candidate we drew the same $8{,}000$ single-label TRAIN images (seed $0$) and measured three properties of its embedding.
\emph{Label geometry} is the linear CKA between the embedding and the one-hot label matrix, reported relative to a $32\times32$ pixel thumbnail (the pixel floor) and relative to the flattened FLUX.2 latents that the adapter receives. The second ratio bounds how much label structure a CKA objective can add to the latents.
\emph{Retrieval} is the same-class rate among the $10$ nearest neighbours, excluding neighbours from the same patient (MIMIC-CXR) or the same volume (CT-RATE); it is scored on queries that carry a finding ($2{,}083$ and $5{,}191$ queries, random baselines $0.047$ and $0.100$).
\emph{Confound rate} is the fraction of those neighbours that come from the same patient or volume, where lower is better.
We selected the embedding that leads on label geometry and retrieval. Table~\ref{tab:teacher_selection} lists every candidate.

\begin{table}[!ht]
    \centering
    \scriptsize
    \setlength{\tabcolsep}{3.5pt}
    \begin{tabular}{@{}llrrrrr@{}}
        \toprule
        & Embedding & Label CKA & $\times$ pixels & $\times$ latents & Retrieval $\uparrow$ & Confound $\downarrow$ \\
        \midrule
        \multirow{5}{*}{\rotatebox{90}{\textsc{mimic}}}
        & Pixels ($32\times32$) & $0.023$ & $1.00$ & $0.71$ & $0.076$ & $0.002$ \\
        & MedSigLIP~\citep{sellergren2026medsiglip}, pooled & $0.121$ & $5.35$ & $3.78$ & $0.163$ & $0.016$ \\
        & MedSigLIP~\citep{sellergren2026medsiglip}, image embedding & $0.182$ & $8.01$ & $5.66$ & $0.209$ & $0.017$ \\
        & MedImageInsight~\citep{codella2024medimage}, pooled & $0.192$ & $8.46$ & $5.97$ & $0.279$ & $0.017$ \\
        & \ours{MedImageInsight~\citep{codella2024medimage}, image embedding} & \ours{$\mathbf{0.198}$} & \ours{$\mathbf{8.75}$} & \ours{$\mathbf{6.18}$} & \ours{$\mathbf{0.281}$} & \ours{$0.016$} \\
        \midrule
        \multirow{7}{*}{\rotatebox{90}{\textsc{ct-rate}}}
        & Pixels ($32\times32$) & $0.079$ & $1.00$ & $0.99$ & $0.130$ & $0.232$ \\
        & MedSigLIP~\citep{sellergren2026medsiglip}, pooled & $0.175$ & $2.20$ & $2.19$ & $0.198$ & $0.181$ \\
        & \ours{MedSigLIP~\citep{sellergren2026medsiglip}, image embedding} & \ours{$\mathbf{0.194}$} & \ours{$\mathbf{2.45}$} & \ours{$\mathbf{2.44}$} & \ours{$\mathbf{0.202}$} & \ours{$0.154$} \\
        & MedImageInsight~\citep{codella2024medimage}, pooled & $0.120$ & $1.52$ & $1.51$ & $0.197$ & $0.112$ \\
        & MedImageInsight~\citep{codella2024medimage}, image embedding & $0.116$ & $1.47$ & $1.46$ & $0.197$ & $\mathbf{0.106}$ \\
        & Curia~\citep{dancette2025curia} (CLS) & $0.137$ & $1.73$ & $1.72$ & $0.182$ & $0.263$ \\
        & FlexiCT-2D~\citep{li2026flexi} (CLS) & $0.095$ & $1.20$ & $1.19$ & $0.167$ & $0.214$ \\
        \bottomrule
    \end{tabular}
    \caption{Teacher selection on $8{,}000$ TRAIN images per dataset, shaded rows selected. Label CKA is linear CKA against the one-hot labels; the two ratios divide it by that of a $32\times32$ pixel thumbnail and of the flattened FLUX.2 latents. Retrieval is the same-class rate at $10$ neighbours excluding the same patient or volume, and confound is the same-patient or same-volume rate at $10$. The CT-native encoders Curia and FlexiCT-2D provide no image-text projection, so their class token stands in for the image embedding.}
    \label{tab:teacher_selection}
\end{table}

The two datasets select different teachers. On MIMIC-CXR MedImageInsight leads MedSigLIP by a small margin in label geometry ($6.18\times$ against $5.66\times$ the latents) and a large one in retrieval. On CT-RATE MedSigLIP leads every alternative in label geometry by a wide margin ($2.44\times$ against at most $1.72\times$), while the CT-native encoders carry more volume identity than the pixel floor. Section~\ref{sec:q2} compares these margins with the downstream results.

\subsection{Qualitative samples}
\label{app:qualitative}

Figures~\ref{fig:qual_mimic} and~\ref{fig:qual_ctrate} show uncurated samples from every method in Table~\ref{tab:main}. For each class we take the first generated image of that class from flow-matching seed $0$ and generation seed $0$. The top row shows the first TEST image of each class. No sample was filtered, re-drawn, or reordered.

\begin{figure*}[!ht]
    \centering
    \includegraphics[width=\textwidth]{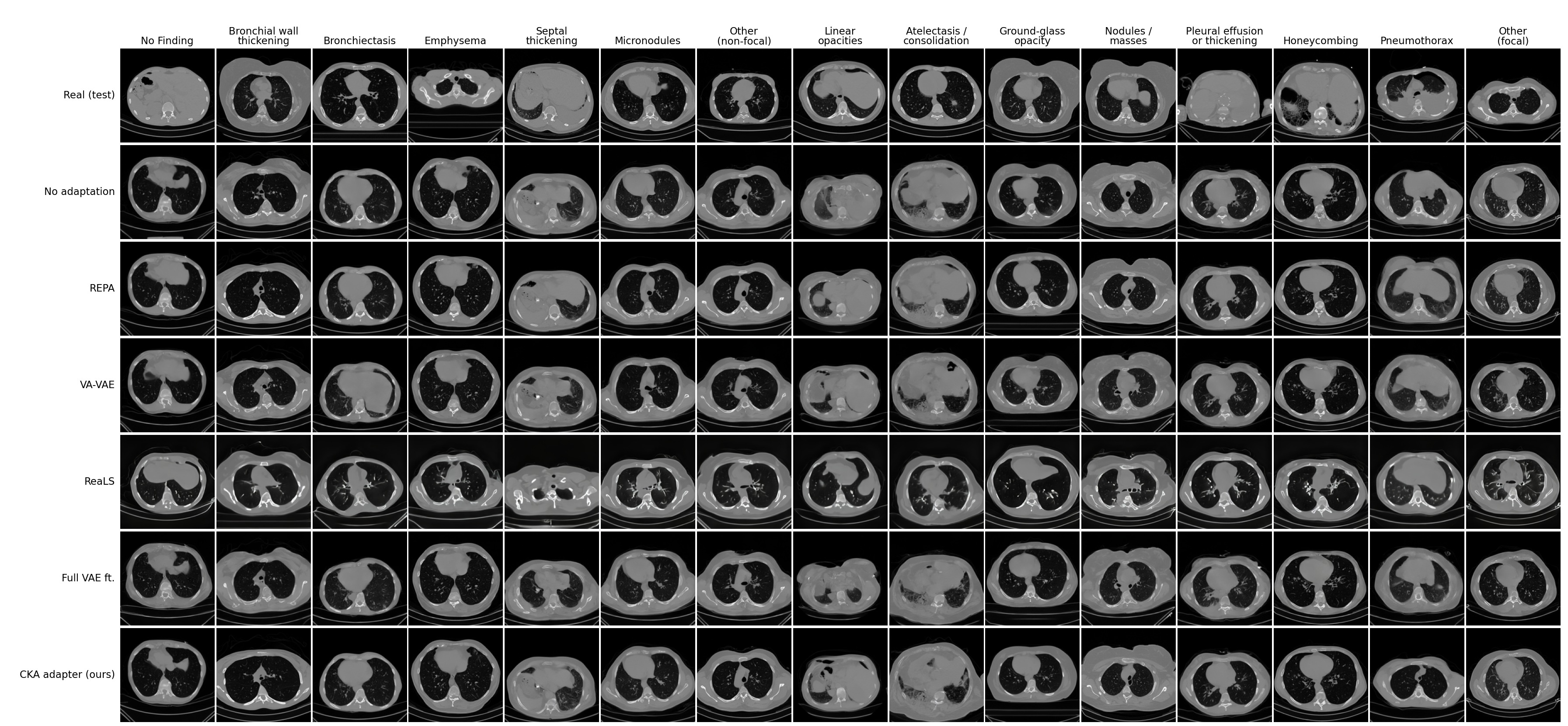}
    \caption{Uncurated CT-RATE samples, flow-matching seed $0$, generation seed $0$, for all fifteen classes: No Finding, six non-focal findings (bronchial wall thickening through other non-focal), and eight focal findings (linear opacities through other focal). Slices are shown with the posterior side down. Each column is one class; all generated entries of a column start from the same initial noise.}
    \label{fig:qual_ctrate}
    
\end{figure*}
\begin{figure*}[p]
    \centering
    \includegraphics[width=\textwidth]{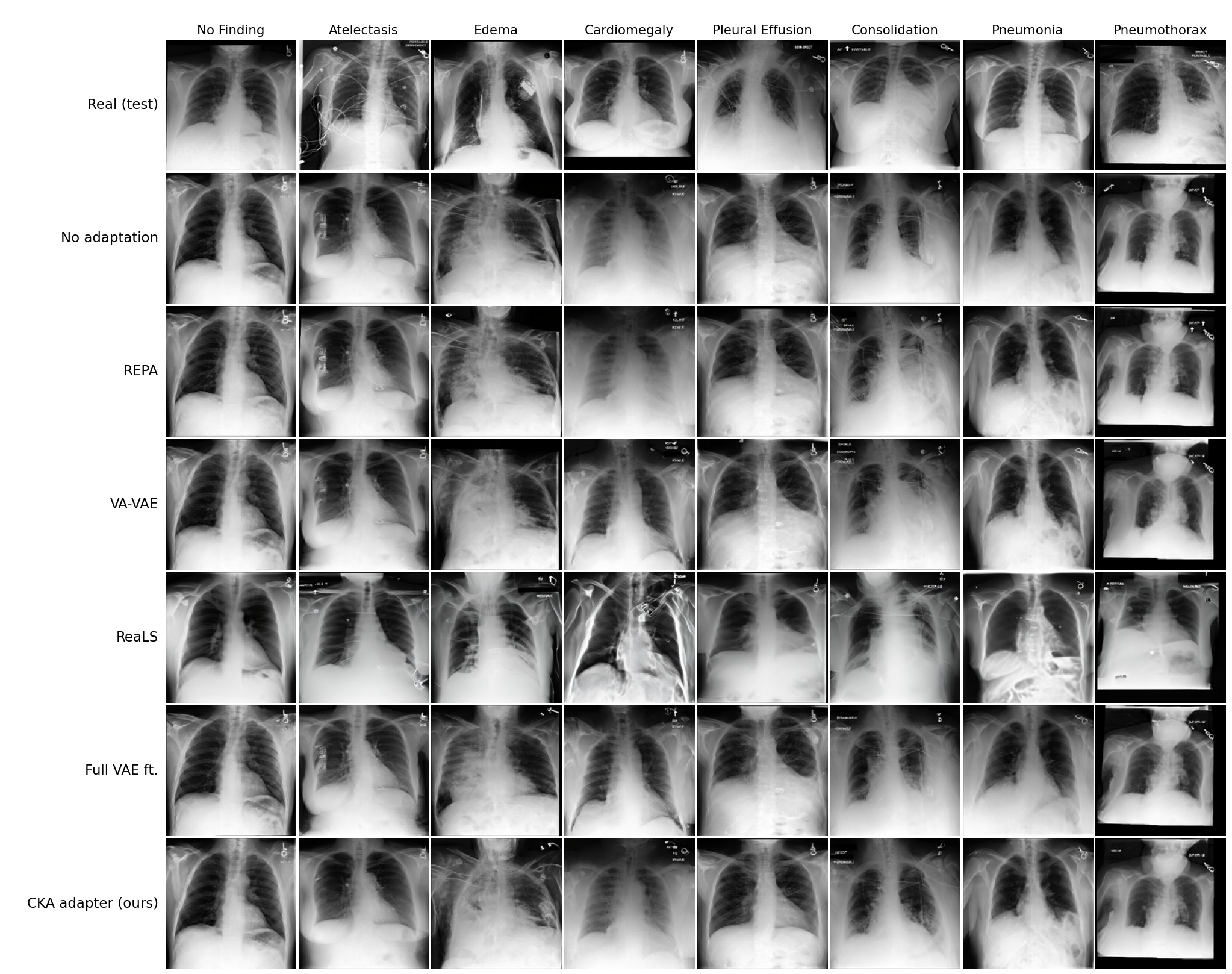}
    \caption{Uncurated MIMIC-CXR samples, flow-matching seed $0$, generation seed $0$. Each column is one class; all generated entries of a column start from the same initial noise.}
    \label{fig:qual_mimic}
\end{figure*}

\end{document}

%% file: math_commands.tex
\usepackage{amsmath,amsfonts,bm}

\def\eqref#1{equation~\ref{#1}}

\def\1{\bm{1}}

\DeclareMathAlphabet{\mathsfit}{\encodingdefault}{\sfdefault}{m}{sl}
\SetMathAlphabet{\mathsfit}{bold}{\encodingdefault}{\sfdefault}{bx}{n}

